  \providecommand\BibTeX{{%
    Bib\TeX}}}
\def\BibTeX{{\rm B\kern-.05em{\sc i\kern-.025em b}\kern-.08em
    T\kern-.1667em\lower.7ex\hbox{E}\kern-.125emX}}
\newtheorem{theorem}{Theorem}
\newcommand{\bL}{\ensuremath{\mathcal{L}}}
\newcommand{\bN}{\ensuremath{\mathcal{N}}}
\renewcommand{\vec}[1]{\ensuremath{\mathbf{#1}}}
\newcommand{\eg}{{\it e.g.}}
\newcommand{\ie}{{\it i.e.}}
\newcommand{\etc}{{\it etc.}}
\newcommand{\method}[1]{{#1}}
\newcommand{\model}{\method{DMNS}{}}
\newcommand{\eat}[1]{}
\newcommand{\stkout}[1]{\ifmmode\text{\sout{\ensuremath{#1}}}\else\sout{#1}\fi}
\title{Diffusion-based Negative Sampling on Graphs for Link Prediction}
\begin{document}

\author{Trung-Kien Nguyen$^*$}
\affiliation{
  \institution{University of Southern California}
  \country{United States}
  }
\email{kientngu@usc.edu}

\author{Yuan Fang}
\affiliation{
 \institution{Singapore Management University}
 \country{Singapore}
 }
\email{yfang@smu.edu.sg}

\thanks{
    $^*$Work partly done at Singapore Management University.
}

\renewcommand{\shortauthors}{Trung-Kien Nguyen and Yuan Fang}

\begin{abstract}
 Link prediction is a fundamental task for graph analysis with important applications on the Web, such as social network analysis and recommendation systems, \etc\ 
Modern graph link prediction methods often employ a contrastive approach to learn robust node representations, where negative sampling is pivotal. Typical negative sampling methods aim to retrieve hard examples based on either predefined heuristics or automatic adversarial approaches, which might be inflexible or difficult to control. Furthermore, in the context of link prediction, most previous methods sample negative nodes from existing substructures of the graph, missing out on potentially more optimal samples in the latent space. To address these issues, we investigate a novel strategy of \emph{multi-level} negative sampling that enables negative node generation with flexible and controllable ``hardness'' levels from the latent space. Our method, called Conditional Diffusion-based Multi-level Negative Sampling (DMNS), leverages the Markov chain property of diffusion models to generate negative nodes in multiple levels of variable hardness and reconcile them for effective graph link prediction. We further demonstrate that DMNS follows the sub-linear positivity principle for robust negative sampling. Extensive experiments on several benchmark datasets demonstrate the effectiveness of DMNS.
\end{abstract}



\begin{CCSXML}
<ccs2012>
   <concept>
       <concept_id>10010147.10010257.10010293.10010319</concept_id>
       <concept_desc>Computing methodologies~Learning latent representations</concept_desc>
       <concept_significance>500</concept_significance>
       </concept>
   <concept>
       <concept_id>10002951.10003227.10003351</concept_id>
       <concept_desc>Information systems~Data mining</concept_desc>
       <concept_significance>500</concept_significance>
       </concept>
 </ccs2012>
\end{CCSXML}

\ccsdesc[500]{Computing methodologies~Learning latent representations}
\ccsdesc[500]{Information systems~Data mining}

\keywords{link prediction, negative sampling, diffusion models, graph neural networks}

\maketitle

\section{Introduction}

Graph, which consists of nodes and links between them, is a ubiquitous data structure for real-world networks and systems. Link prediction \cite{lu2011link} is a fundamental problem in graph analysis, aiming to model the probability that two nodes relate to each other in a network or system. Alternatively, given a query node, link prediction aims to rank other nodes based on their probability of linking to the query node. Graph link prediction enables a wide range of applications on the Web, such as friend suggestions in social networks \cite{chen2020friend}, products recommendation in e-commerce platforms \cite{ying2018graph}, and knowledge graph completion \cite{sans} for Web-scale knowledge bases. 

A prominent approach that has been extensively studied for link prediction is graph representation learning. It trains an encoder to produce low-dimensional node embeddings that capture the original graph topology in a latent space. 
Toward link prediction, many graph representation learning methods \cite{tang2015line, mcns, sage} follow the noise contrastive estimation approach \cite{gutmann2010noise}, which resorts to sampling a set of positive and negative nodes for any query node. Specifically, the encoder is trained to capture graph topology by bringing positive node pairs closer while separating the negative pairs in the embedding space. 
While sampling the positive examples for link prediction is relatively straightforward (\eg, one-hop neighbors of the query node), sampling the negative examples involves a huge search space that is potentially quadratic in the number of nodes and a significant fraction of unlinked node pairs could be false negatives since not all links may be observed. Therefore, studying negative sampling for contrastive link prediction on graphs is a crucial research problem. 


Many strategies have been proposed for negative sampling on graphs, yet it is often challenging to flexibly model and control the quality of negative nodes.  
While uniform negative sampling \cite{tang2015line,sage} is simple, it ignores the quality of negative examples: Harder negative examples can often contribute more to model training than easier ones.  
Many studies explore predefined heuristics to select hard negative nodes, such as popularity \cite{word2vec}, dynamic selections based on current predictions \cite{dns}, $k$-hop neighborhoods \cite{sans}, Personalized PageRank \cite{ying2018graph}, \etc\ However, heuristics not only need elaborate designs, but also tend to be inflexible and may not extend to different kinds of graph. 
For instance, homophilous and heterophilous graphs exhibit different connectivity patterns, which means a good heuristic for one would not work well for the other.
Besides heuristics, automatic methods leveraging generative adversarial networks (GANs) \cite{kbgan, graphgan} are also popular. They aim to learn the underlying distribution of the nodes and retrieve harder ones automatically. However, it is still difficult to flexibly control the ``hardness'' of the negative examples for more optimal contrastive learning. 
Furthermore, it has been shown that the hardest negative examples may impair the performance \cite{faghri2017vse++, xuan2020hard}, when they are nearly indistinguishable from the positive ones particularly in the early phase of training. 
To overcome this, we propose a strategy of \emph{multi-level} negative sampling, where we can flexibly control the hardness level of the negative examples according to the need. For instance, easier negative examples can be used to warm-up model training, while harder ones are more critical to refining the decision boundary. Overall, a well-controlled mixture of easy and hard examples are expected to improve learning. 

The idea of multi-level negative sampling immediately brings up the second question: Where do we find enough  negative examples of variable hardness? Most negative sampling approaches \cite{tang2015line,sage,word2vec,dns,sans,ying2018graph} are limited to sampling nodes from the observed graph. However, observed graphs in real-world scenarios are often noisy or incomplete, which are not ideal sources for directly sampling a sufficient mixture of negative examples at different levels of hardness. 
To tackle this issue, we propose to synthesize more negative nodes to complement the real ones in the latent space. 
The latent space can potentially provide infinite negative examples with arbitrary hardness, to aid the generation of multi-level negative examples in a flexible and controllable manner. Although a few studies \cite{hu2019adversarial,li2023robust} also utilize GANs to generate additional samples in the latent space, they are not designed to synthesize multi-level examples.



To materialize our multi-level strategy, a natural choice is diffusion models. 
Recently, diffusion models have achieved promising results in generation tasks \cite{ddpm, song2019generative}, particularly in visual applications \cite{ho2022cascaded, dhariwal2021diffusion}. 
While GAN-based models are successful in generating high-fidelity examples, they face many issues in training
such as vanishing gradients and mode collapse \cite{arjovsky2017towards, arjovsky2017wasserstein}. In contrast, diffusion models follow a reconstruction mechanism 
that offers a stable training process \cite{diff_survey}. More specific to our context, a desirable property of diffusion model is that it utilizes a  Markov chain with multiple time steps to denoise random input, where the sample generated at each time step is conditioned on the sample in the immediately preceding step. Hence, during the generation process, we can naturally access the generated samples at different denoised time steps
to achieve our multi-level strategy.
Hence, we propose a novel diffusion-based framework to generate negative examples for graph link prediction, named Conditional Diffusion-based Multi-level Negative Sampling (DMNS). 
On one hand, we employ a diffusion model to learn the one-hop connectivity distribution of any given query node, \ie, the positive distribution in link prediction. The diffusion model allows us to flexibly control the hardness of each negative example by looking up a specified time step $t$, ranging from a virtually positive example (or indistinguishable from the positives) when $t=0$ and harder examples as $t\to 0$, to easier ones amounting to random noises as $t\to \infty$.
On the other hand, we adopt a conditional diffusion model \cite{ho2022cascaded, dhariwal2021diffusion}, which is designed to explicitly condition the generation on side information. In graph link prediction, we condition the positive distribution of a specific query node on the node itself, and thus obtain query-specific diffusion models. 
Theoretically, we show that the density function of our negative examples obeys the sub-linear positivity principle \cite{mcns} under some constraint, ensuring robust negative sampling on graphs. 

Our contributions in this work are summarized as follows. 1) We investigate the strategy of multi-level negative sampling on graphs for contrastive link prediction. 2) We propose a novel framework \model\ based on a conditional diffusion model, which generates multi-level negative examples that can be flexibly controlled to improve training. 3) We show that the distribution function of our negative examples follows the sub-linear positivity principle under a defined constraint. 4) We conduct extensive experiments on several benchmark datasets, showing that our model outperforms state-of-the-art baselines on graph link prediction.


\section{Related Work}

In this section, we briefly review and discuss related work from three perspectives, namely, link prediction, negative sampling, and diffusion models.

\paragraph{Link prediction} Classical methods on link prediction hinge on feature engineering to measure the connection probability between two nodes, such as based on the proximity \cite{page1999pagerank}, or the similarity of neighborhoods \cite{adamic2003friends, zhou2009predicting}. The success of deep learning has motivated extensive studies on graph representation learning \cite{gcn,gat,sage}.
On learned graph representations, a simple strategy for link prediction is to employ a node-pair decoder \cite{grover2016node2vec}. More sophisticated approach exploits the representations of the enclosing subgraphs. For example, SEAL \cite{seal} proposes the usage of local subgraphs based on the $\gamma$-decaying heuristic theory. Meanwhile, ScaLED \cite{scaled} utilizes random walks to efficiently sample subgraphs for large-scale networks. On the other hand, BUDDY \cite{chamberlain2022graph} proposes subgraph sketches to approximate essential features of the subgraphs without constructing explicit ones.


\paragraph{Negative sampling} 
Graph representation learning approaches for link prediction commonly employ contrastive strategies \cite{ying2018graph, you2020graph}. The contrastive methods require effective negative sampling to train the graph encoders. Various negative sampling heuristics have been proposed, such as based on the popularity of examples \cite{word2vec}, current prediction scores \cite{dns} and selecting high-variance samples \cite{ding2020simplify}.
On graphs, SANS \cite{sans} select negative examples from $k$-hop neighborhoods. MixGCF \cite{huang2021mixgcf} synthesizes hard negative examples by hop and positive mixing. MCNS \cite{mcns} develops a Metropolis-Hasting sampling approach based on the proposed sub-linear positivity theory.
Another line of research utilizes generative adversarial networks (GANs) for automatic negative sampling. 
Among them, GraphGAN \cite{graphgan} and KBGAN \cite{kbgan} learns a distribution over negative candidates, while others generate new examples not found in the original graph \cite{nagnn,hu2019adversarial,li2023robust}. 
However, these methods cannot easily control the hardness of the negative examples, which is the key motivation of our multi-level strategy.

\paragraph{Diffusion models} Diffusion models \cite{ddpm, sohl2015deep, song2019generative, song2020score} have become state-of-the-art generative models. They gradually inject noises into the data  and then learn to reverse this process for sampling. Denoising diffusion probabilistic models (DDPMs) \cite{ddpm, sohl2015deep} aims to predict noises from the diffused output at arbitrary time steps. Score-based Generative Models (SGMs) \cite{song2019generative, song2020improved, song2020score} aims to predict the score function $\nabla\log(p_{x_t})$. They have different approaches but are shown to be equivalent to optimizing a diffusion model. Conditional diffusion models \cite{ho2022cascaded, dhariwal2021diffusion} permit explicit control of generated samples via additional conditions, which enables a wide range of applications such as visual generation \cite{saharia2022palette}, NLP \cite{gong2022diffuseq}, multi-modal generation \cite{nichol2021glide, ho2022imagen}, \etc\ Recently, some studies have adopted diffusion models in graph generation tasks. EDM \cite{hoogeboom2022equivariant} learns a diffusion process to work on 
3d molecule generation, while GDSS \cite{jo2022score} learns to generate both node features and adjacency matrices. However, they do not aim to generate multi-level node samples for graph link prediction. We refer readers to a comprehensive survey \cite{yang2022diffusion}.

\section{Preliminaries}
In this section, we briefly introduce the background of graph neural networks and diffusion models, which are the foundation of our proposed \model.

\paragraph{Graph neural networks}

Message-passing GNNs usually resort to multi-layer neighborhood aggregation, in which each node recursively aggregates information from its neighbors. Specifically, the representation of node $v$ in the $l$-th layer, $\vec{h}^l_v\in \mathbb{R}^{d_h^l}$, can be constructed as
\begin{align}
    \vec{h}^{l}_v = \sigma\left(\textsc{Aggr}(\vec{h}^{l-1}_v, \{\vec{h}^{l-1}_i: i \in\bN_v\}; \omega^l)\right),
\end{align}
where $d_h^l$ is the dimension of node representations in the $l$-th layer, $\textsc{Aggr}(\cdot)$ denotes an aggregation function such as mean-pooling \cite{gcn}, self-attention \cite{gat} or concatenation \cite{sage}, $\sigma$ is an activation function, $\bN_v$ denotes the set of neighbors of $v$, and $\omega^l$ denotes the learnable parameters in layer $l$.

\paragraph{Diffusion models}

Denoising diffusion model (DDPM) \cite{ddpm} boils down to learning the Gaussian transitions of Markov chains.
Specifically, DDPM consists of two Markov chains: a forward diffusion process, and a backward denoising process (also known as the ``reverse'' process). The forward chain transforms input data to complete noise by gradually adding Gaussian noise at each step:
\begin{align}
    q(\vec{x}_{1:T}|\vec{x}_0) &= \textstyle \prod^T_{t=1} q(\vec{x}_t | \vec{x}_{t-1}), \\
    q(\vec{x}_t|\vec{x}_{t-1}) &=\mathcal{N}(\vec{x}_t; \sqrt{1-\beta_t} \vec{x}_{t-1}, \beta_t \mathbf{I}),
\end{align}
where $T$ is the total number of time steps of the diffusion process, and $\beta_t$ denotes the variances in the diffusion process, which can be learnable or fixed constants via some scheduling strategy. By a reparameterization trick,  the closed form of output $\vec{x}_t$ at arbitrary time step $t$ can be obtained as
\begin{align}
    \vec{x}_t = \sqrt{\Bar{\alpha}_t} \vec{x}_0 + \sqrt{1-\Bar{\alpha}_t} \epsilon_t,
\end{align}
where $\alpha_t=1-\beta_t$, 
$\Bar{\alpha}_t = \prod^t_{i=1} \alpha_i$, and $\epsilon_t \sim \mathcal{N}(\vec{0},\mathbf{I})$. The backward denoising process
learns to reconstruct the input from noise:
\begin{align}
   p_\theta(\vec{x}_{0:T}) &= p(\vec{x}_T) \textstyle \prod^T_{t=1}p_\theta(\vec{x}_{t-1}|\vec{x}_t) \\
   p_\theta(\vec{x}_{t-1}|\vec{x}_t) &=\mathcal{N}(\vec{x}_{t-1}; \mu_\theta(\vec{x}_t,t),\Sigma_\theta(\vec{x}_t,t)) \\
   p(\vec{x}_T) &=\mathcal{N}(\vec{x}_T;\vec{0},\mathbf{I}),
\end{align}
where $\theta$ parameterizes the diffusion model.
To model the joint distribution $p_\theta(\vec{x}_{t-1}|\vec{x}_t)$, DDPM sets $\Sigma_\theta(\vec{x}_t,t) = \beta_t\mathbf{I}$ as scheduled constants, and derive $\mu_\theta(\vec{x}_t,t)$ by a reparameterization trick as
\begin{align}
   \mu_\theta(\vec{x}_t,t) = \textstyle\frac{1}{\sqrt{\alpha_t}}\vec{x}_t - \frac{1 - \alpha_t}{\sqrt{\alpha_t}\sqrt{1-\Bar{\alpha}_t}} \epsilon_\theta(\vec{x}_t,t),
\end{align}
 where $\epsilon_\theta(\vec{x}_t,t)$ is a function to estimate the source noise $\epsilon$ and can be implemented as a neural network. The objective now learns to predict added noises instead of means, which has been shown to achieve better performance.

\section{Proposed Model: \model}

In this section, we introduce the proposed method \model\ to generate multi-level negative examples for graph link prediction.
Before we begin, we first present the overall framework of \model\ in Fig.~\ref{fig.framework}. We employ a standard GNN encoder to obtain node embeddings, which captures content and structural neighborhood information. Next, we train a diffusion model to learn the neighborhood distribution conditioned on the query node. From the model we sample several output embeddings at different time steps, to serve as negative examples at multi-level hardness for contrastive learning.

\begin{figure*}[t]
\centering
\includegraphics[width=0.95\linewidth]{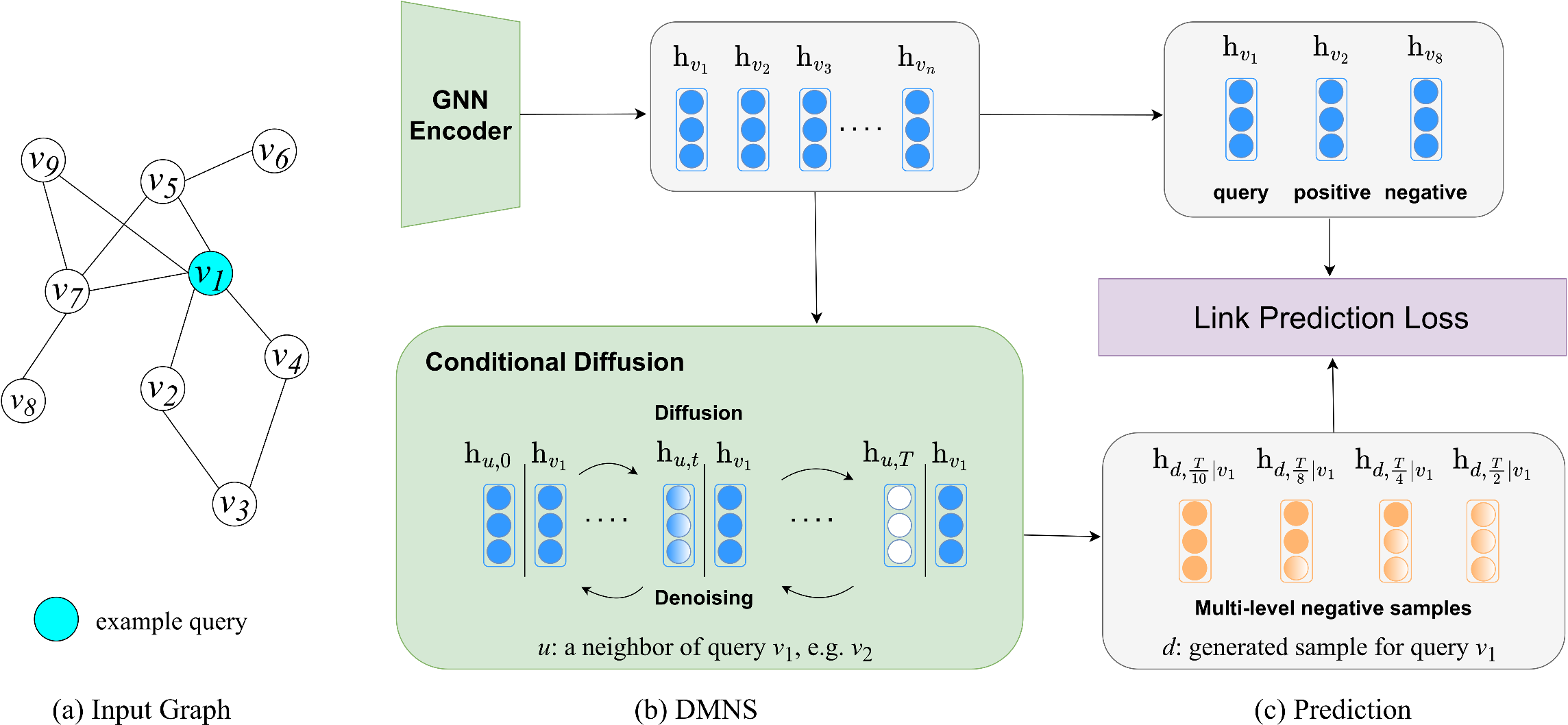}

\caption{Overall framework of \model. } 
\label{fig.framework}
\end{figure*}

\subsection{Multi-level Negative Sample Generation}

Given a query node, we aim to learn the distribution of its 1-hop neighbors, \ie, its positive distribution for the link prediction task. Vanilla diffusion model only generates generic examples without the ability to personalize for the query node. To this end, we leverage the conditional diffusion model \cite{ho2022cascaded, dhariwal2021diffusion}, taking query node embeddings as additional information for sample generation. From that, we can perform multi-level negative sampling by extracting generated node embeddings from multiple time steps. The choice of time steps empowers us to control the hardness of negative examples. For instance, the embedding output from the final step of the denoise (reverse) process ($t=0$) 
is the hardest to distinguish from the positive nodes in the latent space, while its counterparts from earlier steps (larger $t$'s) can be taken as progressively easier negative nodes.
In general, the time step $t$ is a proxy to the hardness of negative examples: a smaller $t$ gives harder examples. Thus, we can automatically incorporate such multi-level negative examples into training the link prediction task.

Specifically, for a query node $v$, we obtain its embedding $\vec{h}_v\in\mathbb{R}^{d_{h}}$ from a GNN encoder, as well as its neighbors' embeddings $\vec{h}_u, \ \forall u \in \bN_v$ where $\bN_v$ denotes the neighbor set of $v$. 
The node embeddings will be fed into the diffusion model to learn the neighbor (positive) distribution of the query node. 
In the forward diffusion process, we gradually add bite-sized noise to the neighbor node $u$'s embedding. Following the reparameterization trick \cite{ddpm, diff_survey}, we can obtain the closed form of the output $\vec{h}_{u,t}$ at an arbitrary time step $t$ without relying on the intermediate output:
\begin{align}
    \vec{h}_{u,t} = \sqrt{\Bar{\alpha}_t} \vec{h}_{u} + \sqrt{1-\Bar{\alpha}_t} \epsilon_t, \quad \forall u \in \bN_v,
\end{align}

In the denoise process, we aim to predict the added noise from the diffused node $u$ at time step $t$ given the query node $v$, which is formulated as 
\begin{align}
    \epsilon_{t,\theta|v} = \tau(\vec{h}_{u,t}, \vec{t}, \vec{h}_v; \theta), \quad \forall u \in \bN_v,
\end{align}
where $\vec{t}\in\mathbb{R}^{d_{h}}$ is a continuous embedding vector for time $t$, and $\theta$ denotes learnable parameters. 
Following earlier work \cite{ddpm}, we employ the sinusoidal positional encoding for the time steps, such that $[\vec{t}]_{2i}=\sin(t/10000^\frac{2i}{d_h})$ and $[\vec{t}]_{2i+1}=\cos(t/10000^\frac{2i}{d_h})$, followed by a multilayer perceptron (MLP).  $\tau(\cdot;\theta)$ is a learnable transformation function, which estimates the transformation of diffused embeddings at time $t$ to predict the noise. In our model, we implement $\tau$ as a feature-wise linear modulation (FiLM) \cite{film} layer, which is conditioned on both the time step $t$ and query node embedding $\vec{h}_v$.
Specifically, 
\begin{align}
    \epsilon_{t,\theta|v} = (\gamma + \vec{1}) \odot \vec{h}_{u,t} + \eta,
\end{align}
where $\gamma$ and $\eta \in \mathbb{R}^{d_h}$ are scaling and shifting vectors to transform the diffused node embedding, respectively; $\vec{1}$ denotes a vector of ones to center the scaling around one, and $\odot$ denotes element-wise multiplication. The transformation vectors are learnable, conditioned on the diffusion time embedding $\vec{t}$ and the query node embedding $\vec{h}_v$.  We implement $\gamma$ and $\beta$ as fully connected layers (FCLs), dependent upon the conditional embeddings as follows.
\begin{align}
    \gamma = \textsc{FCL}(\vec{t} + \vec{h}_v; \theta_\gamma),  \quad
    \eta =  \textsc{FCL}(\vec{t} + \vec{h}_v; \theta_\eta),
\end{align}
where $\theta_\gamma,\theta_\eta$ are parameters of the FCLs. We aggregate the time step and query node information by summation, yet other methods such as a neural network can also be considered. Also note that in practical implementations, multiple FiLM layers can be stacked to enhance model capacity. 

\subsection{Training Objective}

The diffusion model and GNN encoder are trained simultaneously in an alternating manner. 
In the outer loop, the GNN encoder is updated for link prediction, taking into account the multi-level negative nodes generated by the diffusion model conditioned on the query node.
In the inner loop, the diffusion model is updated to generate the positive neighbor distribution of the query node, based on the current GNN encoder.  
More details of the training process are outlined  in Appendix~\ref{app:settings}. 
In the following, we discuss the loss functions for the diffusion and GNN-based link prediction.

\paragraph{Diffusion loss} We employ a mean squared error (MSE) between sampled noises from the forward process and predicted noises from the reverse process \cite{ddpm}. That is, at time step $t$,
\begin{equation}
\label{eq:13}
\bL_D = \| \epsilon_t - \epsilon_{t, \theta|v} \|^2
\end{equation}

After the diffusion model is updated, we utilize it to synthesize node embeddings for the main link prediction task. Starting from complete noises ($t=T$), we obtain generated samples by sequentially removing predicted noises at each time step.
\begin{align}
    \vec{h}_{d,T|v} &\sim \mathcal{N}(\vec{0}, \vec{I}), \\
    \quad \vec{h}_{d,t-1|v} &= \textstyle\frac{1}{\sqrt{\alpha_t}} \left( 
        \vec{h}_{d,t|v} - \frac{1-\alpha_t}{\sqrt{1- \Bar{\alpha}_t}} \epsilon_{t, \theta|v} \right) + \sigma_t \vec{z},
\end{align}
where the standard deviation $\sigma_t = \sqrt{\beta_t}$ and $\vec{z} \sim \mathcal{N}(\mathbf{0}, \mathbf{I})$. 

The sequence of generated samples $\{\vec{h}_{d,t|v}:0\le t\le T\}$ represents multi-level negative nodes in the latent space, w.r.t.~a query node $v$. We can easily control the number and hardness requirement of the negative sampling, by choosing certain time steps between $0$ and $T$.
On the one hand, sampling from too many smaller time steps consumes more memory but brings little diversity. On the other hand, sampling from larger time steps may bring in trivial noises which are easier to distinguish. In our implementation, we balance the multi-level strategy by choosing the output from a range of well-spaced steps to form our generated negative set for the query node $v$: $D_v=\{\vec{h}_{d,t|v}:t = \frac{T}{10}, \frac{T}{8}, \frac{T}{4}, \frac{T}{2}\}$, which are both efficient and robust.

\paragraph{Link prediction loss} We adopt the log-sigmoid loss \cite{sage} for the link prediction task. Consider a quadruplet $(v,u,u',D_v)$, where $v$ is a query node, $u$ is a positive node linked to $v$, $u'$ is an existing negative node randomly sampled from the graph, and $D_v$ is a set of multi-level negative nodes w.r.t.~the query $v$, which are latent node embeddings generated from the diffusion model at chosen time steps. Note that we still employ existing nodes from the graph as negative nodes, to complement the samples generated in the latent space. Then, the loss is formulated as 
\begin{align}
    \bL = & -\log\sigma(\vec{h}^\top_{v} \vec{h}_{u}) -\log\sigma(-\vec{h}^\top_{v} \vec{h}_{u'}) \nonumber\\ &-\textstyle\sum_{d_i\in D_v} w_i \log\sigma(-\vec{h}^\top_{v} \vec{h}_{d_i}))    \label{eq:15}
\end{align}
where $\sigma(\cdot)$ is the sigmoid activation, and $w_i$ is the weight parameter for each negative example generated at different time steps. The idea is, given different levels of hardness associated with different time steps, the importance of the examples from different steps also varies. While various strategies for $w$ can be applied, we use a simple linearly decayed sequence of weights 
with the assumption that closer steps (harder examples) are more important. Further investigation on the choice and weighting of samples will be discussed in Sect.~\ref{sec.expt}.

\subsection{Theoretical Analysis} 
\label{sec.theoretical}

We present a theoretical analysis to justify the samples generated from our conditional diffusion model. Specifically, a 
\textbf{Sub-linear Positivity Principle} \cite{mcns} has been established earlier for robust negative sampling on graph data.
The principle states that the negative distribution should be \emph{positively} but 
\emph{sub-linearly correlated} to the positive distribution, which has been shown to be able to balance the trade-off between the embedding objective and expected risk. 
Here, we show that the negative examples generated from the diffusion model are in fact drawn from a negative distribution that follows the principle.


\begin{theorem}[Sub-linear Positivity Diffusion]
Consider a query node $v$.
Let $\vec{x}_n \sim \mathcal{N}(\mu_{t,\theta},\Sigma_{t,\theta})$ and $\vec{x}_p \sim \mathcal{N} (\mu_{0,\theta},\Sigma_{0,\theta})$ represent samples drawn from the negative and positive distributions of node $v$, respectively. Suppose the parameters of the two distributions are
specified by a diffusion model $\theta$ conditioned on the query node $v$ at time $t>0$ and $0$, respectively. 
Then, the density function of the negative samples $f_n$ is sub-linearly correlated to that of the positive samples $f_p$: 
\begin{align}
f_n(\vec{x}_n|v) \propto f_p(\vec{x}_p|v)^\lambda, \quad \text{ for some } 0 < \lambda < 1,
\end{align}
as long as $\Psi\ge 0$, which is a random variable given by $\Psi = 2\Delta^\top\sqrt{\bar{\alpha}_t} (\vec{x}_0 - \mu_0) + \Delta^\top\Delta \geq 0$, where $\Delta = \sqrt{\bar{\alpha}_t} \mu_0 + \sqrt{1 - \bar{\alpha}_t} \epsilon_0 - \mu_t$, $\vec{x}_0$ is generated by the model $\theta$ at time 0, and $\epsilon_0 \sim \mathcal{N}(\mathbf{0}, \mathbf{I})$.
\hfill$\Box$
\end{theorem}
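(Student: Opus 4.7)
The plan is to directly compare the log-densities of the two Gaussians, identify the exponent $\lambda$ by matching coefficients of the quadratic term in $\vec{x}_0-\mu_0$, and absorb everything else into the proportionality constant using the hypothesis $\Psi \ge 0$. First I would write $f_n$ and $f_p$ as multivariate Gaussians with means $\mu_t,\mu_0$ and, following the standard DDPM setup, isotropic covariances $\sigma_t^2 \mathbf{I},\sigma_0^2 \mathbf{I}$, so that the log-densities become affine functions of $\|\vec{x}_n-\mu_t\|^2$ and $\|\vec{x}_p-\mu_0\|^2$, respectively.

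The bridge between the two densities is the forward-process identity $\vec{x}_n = \sqrt{\bar{\alpha}_t}\,\vec{x}_0 + \sqrt{1-\bar{\alpha}_t}\,\epsilon_0$ recalled in the preliminaries. Substituting this into $\vec{x}_n - \mu_t$ yields $\sqrt{\bar{\alpha}_t}(\vec{x}_0 - \mu_0) + \Delta$, with $\Delta$ exactly as defined in the theorem, and expanding the squared norm gives the decomposition
\[
\|\vec{x}_n - \mu_t\|^2 \;=\; \bar{\alpha}_t\,\|\vec{x}_0 - \mu_0\|^2 + \Psi,
\]
which isolates $\Psi$ as the collection of all cross and quadratic-noise terms.

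Feeding this decomposition into $\log f_n - \lambda \log f_p$ and forcing the dependence on $\|\vec{x}_0 - \mu_0\|^2$ to vanish pins down $\lambda = \bar{\alpha}_t \sigma_0^2 / \sigma_t^2$. The standard DDPM covariance relation $\sigma_t^2 = \bar{\alpha}_t \sigma_0^2 + (1-\bar{\alpha}_t)$, applied to the learned positive marginal, then gives $0 < \lambda < 1$ for every $t > 0$, which is the desired sub-linearity. The only residual term in the exponent is $-\Psi/(2\sigma_t^2)$, and the hypothesis $\Psi \ge 0$ guarantees this contribution is non-positive, so $f_n \le C\, f_p^{\lambda}$ with a positive constant $C$ absorbing normalization and noise-dependent offsets, which is exactly the claimed proportionality.

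The main obstacle will be the rigorous handling of $\Psi$: since it is a random variable in $\vec{x}_0$ and $\epsilon_0$ rather than a deterministic constant, the ``$\propto$'' in the theorem is best read as an inequality valid whenever $\Psi \ge 0$, and the condition is precisely what is required to fix its sign. A secondary subtlety is relaxing the isotropic-covariance simplification: if the learned $\Sigma_{t,\theta}$ and $\Sigma_{0,\theta}$ are anisotropic, the coefficient-matching step would require a shared eigenbasis of $\Sigma_t^{-1}$ and $\Sigma_0^{-1}$, or replacement of the scalar identity by an operator inequality---a small extra argument I would need to flesh out before claiming the clean $\lambda \in (0,1)$ in the general case.
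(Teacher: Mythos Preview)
Your proposal follows the paper's proof almost exactly: the same forward-process substitution $\vec{x}_n = \sqrt{\bar{\alpha}_t}\,\vec{x}_0 + \sqrt{1-\bar{\alpha}_t}\,\epsilon_0$, the same expansion $\|\vec{x}_n-\mu_t\|^2 = \bar{\alpha}_t\|\vec{x}_0-\mu_0\|^2 + \Psi$, and the same use of $\Psi \ge 0$ to turn the relation into the inequality $f_n \le C\,f_p^{\lambda}$. The only point to correct is your justification of $\lambda\in(0,1)$: in the paper the learned reverse covariances are the scheduled constants $\Sigma_i=\beta_i\mathbf{I}$, not the forward marginal variances, so the relation $\sigma_t^2=\bar{\alpha}_t\sigma_0^2+(1-\bar{\alpha}_t)$ does not apply; instead the paper sets $\lambda=(\beta_1/\beta_{t+1})\bar{\alpha}_t$ and obtains $0<\lambda<1$ directly from the schedule monotonicity $\beta_1<\beta_{t+1}$ together with $0<\bar{\alpha}_t<1$.
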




\begin{proof}Note that conditional diffusion $\theta$ aims to learn neighbor (positive) distribution of query node $v$, the samples generated from time step 0 can be regarded as positive samples while their counterparts from larger $t>0$ treated as negative ones. Then the density functions of the generated positive and  negative sampling distributions are as follows:
\begin{equation}
\footnotesize
\begin{aligned}
    &f_p(\vec{x}_p|v) = \mathcal{N}(\vec{x}_0;\mu_0,\Sigma^{2}_1) \\
    &= \frac{1}{2\pi^{k/2} \det(\Sigma_1)^{1/2}} \exp \biggl\{ -\frac{1}{2}(\vec{x}_0 - \mu_0)^\top \Sigma^{-1}_1 (\vec{x}_0 - \mu_0) \biggl\} \\
    &= \frac{1}{2\pi^{k/2} \beta_1^{k/2}} \exp \biggl\{ -\frac{1}{2\beta_1}(\vec{x}_0 - \mu_0)^\top (\vec{x}_0 - \mu_0) \biggl\} 
\end{aligned}
\end{equation}

\begin{equation}
\footnotesize
\begin{aligned}
    &f_n(\vec{x}_n|v) = \mathcal{N} (\vec{x}_t;\mu_t,\Sigma^{2}_{t+1}) \\ 
    &= \frac{1}{2\pi^{k/2} \det(\Sigma_{t+1})^{1/2}} \exp \biggl\{ -\frac{1}{2}(\vec{x}_t - \mu_t)^\top\Sigma^{-1}_{t+1} (\vec{x}_t - \mu_t) \biggl\} \\
    &= \frac{1}{2\pi^{k/2} \beta_{t+1}^{k/2}} \exp \biggl\{ -\frac{1}{2\beta_{t+1}}(\vec{x}_t - \mu_t)^\top (\vec{x}_t - \mu_t) \biggl\}
\end{aligned}
\end{equation}
where $\mu_0 = \mu_\theta(\vec{x}_1,1, \vec{v}), \mu_{t} = \mu_\theta(\vec{x}_{t+1},t+1, \vec{v})$, $\Sigma_i = \beta_i \mathbf{I}$, $k$ is the vector dimension.

By applying reparameterization trick \cite{ddpm, diff_survey} we derive:
\begin{align}
    \vec{x}_t &= \sqrt{\bar{\alpha}_t} \vec{x}_0 + \sqrt{1 - \bar{\alpha}_t} \epsilon_0
\end{align}
where $\epsilon_0 \sim \mathcal{N}(\mathbf{0}, \mathbf{I})$. Replacing $x_t$ by $x_0$ in $f_n(\vec{x}_n|v)$ we obtain: 
\begin{equation}
\begin{scriptsize}
\begin{aligned}
    f_n(\vec{x}_n|v) = \frac{1}{2\pi^{k/2} \beta_{t+1}^{k/2}} \exp \biggl\{ -\frac{1}{2 \beta_{t+1}}(\sqrt{\bar{\alpha}_t} \vec{x}_0 + \sqrt{1 - \bar{\alpha}_t} \epsilon_0 - \mu_t)^\top \\
    (\sqrt{\bar{\alpha}_t} \vec{x}_0 + \sqrt{1 - \bar{\alpha}_t} \epsilon_0 - \mu_t) \biggl\} \\
\end{aligned}
\end{scriptsize}
\end{equation}
\begin{equation}
\begin{scriptsize}
\begin{aligned}
    = \frac{1}{2\pi^{k/2} \beta_{t+1}^{k/2}} \exp \biggl\{ -\frac{1}{2 \beta_{t+1}}\Big[ \sqrt{\bar{\alpha}_t} (\vec{x}_0 
    - \mu_0) + \Delta \Big]^\top \\
    \Big[ \sqrt{\bar{\alpha}_t} (\vec{x}_0 
    - \mu_0) + \Delta \Big] \biggl\}
\end{aligned}
\end{scriptsize}
\end{equation}
\begin{equation}
\begin{scriptsize}
\begin{aligned}
   = \frac{1}{2\pi^{k/2} \beta_{t+1}^{k/2}} \exp \biggl\{ -\frac{1}{2 \beta_{t+1}}\Big[ \bar{\alpha}_t (\vec{x}_0 
    - \mu_0)^\top(\vec{x}_0 
    - \mu_0) \\ 
    + 2\Delta^\top\sqrt{\bar{\alpha}_t} (\vec{x}_0 - \mu_0) + \Delta^\top\Delta \Big] \biggl\}
\end{aligned}
\end{scriptsize}
\end{equation}
with $\Delta = \sqrt{\bar{\alpha}_t} \mu_0 + \sqrt{1 - \bar{\alpha}_t} \epsilon_0 - \mu_t$. We denote $\Psi = 2\Delta^\top\sqrt{\bar{\alpha}_t} (\vec{x}_0 - \mu_0) + \Delta^\top\Delta$. If $\Psi \geq 0$, then: 
\begin{scriptsize}
\begin{equation}
     \bar{\alpha}_t (\vec{x}_0 
    - \mu_0)^\top(\vec{x}_0 
    - \mu_0) + \Psi \geq \bar{\alpha}_t (\vec{x}_0 - \mu_0)^\top (\vec{x}_0 - \mu_0)
\end{equation}
\end{scriptsize}
\begin{scriptsize}
\begin{equation}
     \equiv -\frac{1}{2 \beta_{t+1}} \Big[ \bar{\alpha}_t (\vec{x}_0 
    - \mu_0)^\top(\vec{x}_0 
    - \mu_0) + \Psi \Big] 
\end{equation}
\end{scriptsize}
\begin{scriptsize}
\begin{equation}
    \leq -\frac{1}{2 \beta_{t+1}} \bar{\alpha}_t (\vec{x}_0 - \mu_0)^\top (\vec{x}_0 - \mu_0)
\end{equation}
\end{scriptsize}
%
%
%
\begin{scriptsize}
\begin{align}
     &\equiv f_n(\vec{x}_n|v) 
     \leq \frac{1}{2\pi^{k/2} \beta_{t+1}^{k/2}} \exp \biggl\{ -\frac{1}{2 \beta_{t+1}} \bar{\alpha}_t (\vec{x}_0 - \mu_0)^\top (\vec{x}_0 - \mu_0) \biggl\}   \\  
    &\leq \frac{\beta_{1}^{k/2}}{\beta_{t+1}^{k/2}} \frac{1}{2\pi^{k/2} \beta_{1}^{k/2}} \exp \biggl\{ -\frac{\beta_1}{2\beta_{1}\beta_{t+1}} \bar{\alpha}_t (\vec{x}_0 - \mu_0)^\top(\vec{x}_0 - \mu_0) \biggl\} \\
    &\leq \frac{\beta_{1}^{k/2}}{\beta_{t+1}^{k/2}} \frac{1}{2\pi^{k/2} \beta_{1}^{k/2}} \exp \biggl\{ -\frac{1}{2\beta_{1}} (\vec{x}_0 - \mu_0)^\top (\vec{x}_0 - \mu_0) \biggl\}^\lambda \\
    &\propto f_p(\vec{x}_p|v)^\lambda
\end{align}
\end{scriptsize}
where $ 0 < \lambda = \big(\frac{\beta_1}{\beta_{t+1}}\big) \bar{\alpha}_t < 1$ $\big(\beta_1 < \beta_{t+1}$ through variances scheduling and $0< \Bar{\alpha}_t <1 \big)$. Therefore, the density function of negative samples  is sub-linearly correlated to that of positive samples under the constraint  $\Psi = 2\Delta^\top\sqrt{\bar{\alpha}_t} (x_0 - \mu_0) + \Delta^\top\Delta \geq 0$.
\end{proof}


 Specifically, we run an experiment to verify the distribution of $\Psi$. We use our diffusion model to generate a large number of examples at time $0$ and a given $t$, to compute their mean $\mu_0$ and $\mu_t$, respectively. We present the empirical distributions (histograms) of $\Psi$ on the four datasets, across time steps ${\frac{T}{10}, \frac{T}{8}, \frac{T}{4}, \frac{T}{2}}$ in Fig.~\ref{fig.simulation_cora}. 
The probabilities that $\Psi \geq 0$ averaged over samples from 4 time steps on Cora, Citeseer, Coauthor-CS and Actor are 80.14\%, 81.62\%, 81.65\% and 84.31\%, respectively. The results indicate that the majority of generated examples from \model\ adhere to the sub-linear positivity theorem in practice.

\begin{figure}[t]
\centering
\includegraphics[width=\linewidth]{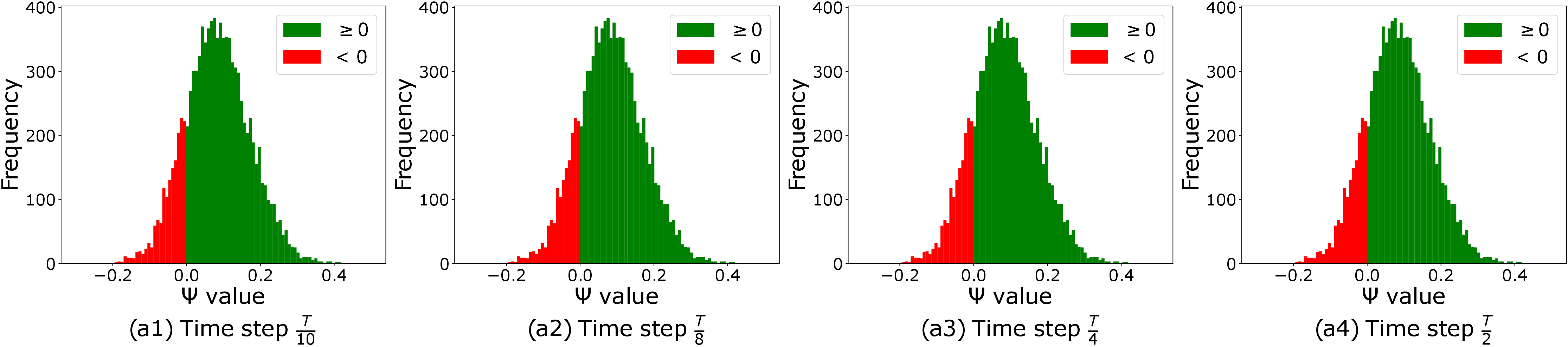}
\includegraphics[width=\linewidth]{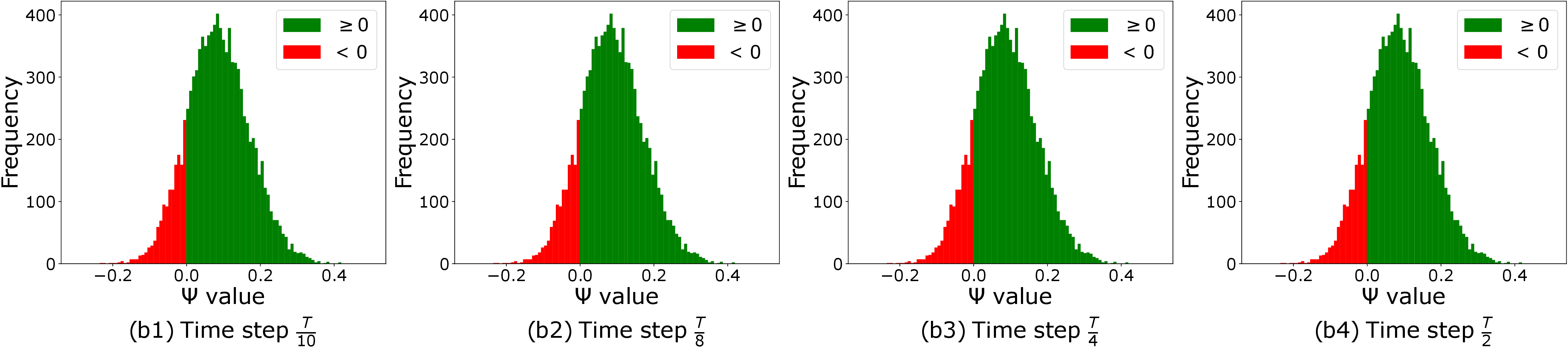}
\includegraphics[width=\linewidth]{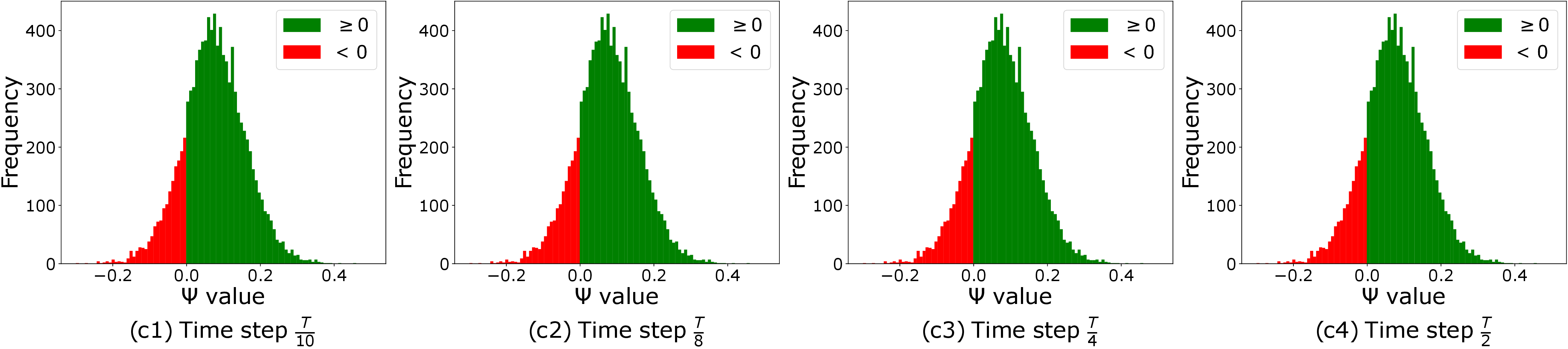}
\includegraphics[width=\linewidth]{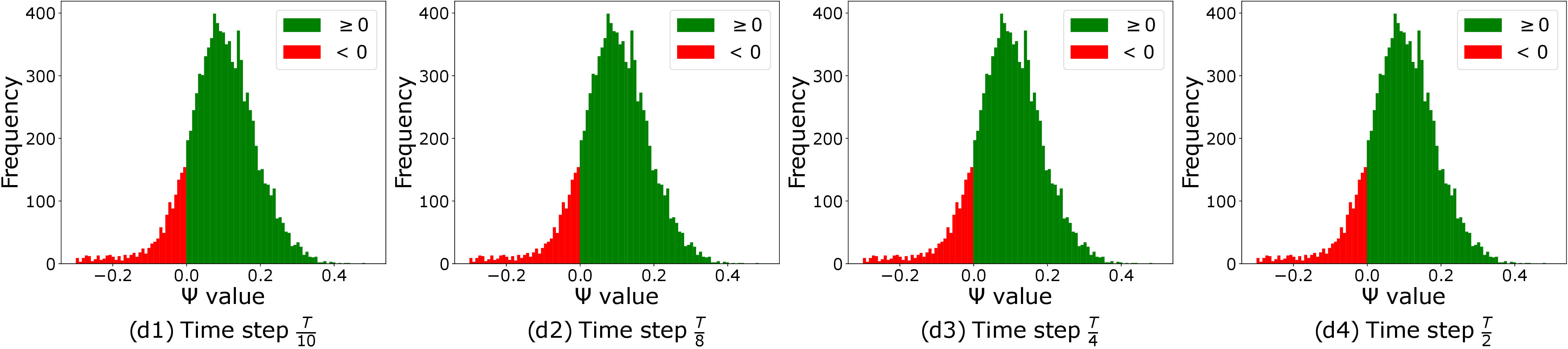}
\vspace{-2mm}
\caption{Empirical distributions (histograms) of $\Psi$ on (a1--a4) Cora, (b1--b4) Citeseer, (c1--c4) Coauthor-CS, (d1--d4) Actor, across different time steps.} 
\label{fig.simulation_cora}
\end{figure}

\subsection{Algorithm and Complexity}

\begin{algorithm}[t]
\small
\caption{\textsc{Model Training for \model}}
\label{alg.c_train}
\begin{algorithmic}[1]
    \Require Graph $G= (V, E)$, training triplets $T=(v, u, u')$
    
    \Ensure GNN model parameters $\omega$, Diffusion model parameters $\theta$.
   \State initialize parameters  $\omega$, $\theta$ 

    \While{not converged}
     \State sample a batch of triplets $T_{\text{b}} \subset T$;
    
     \For{each node $v$ in the batch $T_{\text{b}}$}
        \State $\vec{h}_v = \sigma\left(\textsc{Aggr}(\vec{h}_v, \{\vec{h}_i: i \in\bN_v\}; \omega)\right)$;
    \EndFor
    
    \State \textcolor{blue}{//Train diffusion}
    \While{not converged}
         \For{each query node $v$ in the batch $T_{\text{b}}$}
         
        \State $u \sim N_v$, \,  $\epsilon \sim \mathcal{N}(\mathbf{0}, \mathbf{I})$;
        \State $t \sim Uniform({1, ..,T})$, \, $\vec{t} = MLP(t)$;
        \State $\epsilon_{t,\theta|v} = \tau(\vec{h}_{u,t}, \vec{t}, \vec{h}_v; \theta)$;
        \EndFor
        \State $\bL_D = \| \epsilon_t - \epsilon_{t, \theta|v} \|^2$;
        \State update $\theta$ by minimizing $\bL_D$ with $\omega$ fixed;
    \EndWhile

    \State \textcolor{blue}{//Sampling multi-level generated negative nodes}

    \For{each query node $v$ in the batch $T_{\text{b}}$}
        \State $\vec{h}_{d,T|v} \sim \mathcal{N}(\mathbf{0}, \mathbf{I})$,
    \For{$t$ = T-1,.., 0}
        \State $\vec{z} \sim \mathcal{N}(\mathbf{0}, \mathbf{I})$ if $t>1$ else $\vec{z}=0$ ;
        \State $\vec{h}_{d,t-1|v} = \textstyle\frac{1}{\sqrt{\alpha_t}} \left( 
        \vec{h}_{d,t|v} - \frac{1-\alpha_t}{\sqrt{1- \Bar{\alpha}_t}} \epsilon_{t, \theta|v} \right) + \sigma_t \vec{z}$;
    \EndFor
    
    \State 
         $D_v=\{\vec{h}_{d,t|v}:t = \frac{T}{10}, \frac{T}{8}, \frac{T}{4}, \frac{T}{2}\}$
    \EndFor
    
    \State 
    Calculate $\bL$ as Eqn.~(15);
        
        \State update $\omega$ by minimizing $\bL$ with $\theta$ fixed;
    
    
    
    \EndWhile
    \State \Return  $\omega$, $\theta$.
\end{algorithmic}
\end{algorithm}

We outline the model training for \model\ in Algorithm~\ref{alg.c_train}.
In line 1, we initialize the model parameters. In line 3, we sample a batch of triplets from training data. In lines 4--6, we obtain embeddings for all nodes in training batch by the GNN encoder. In lines 8--16, we train conditional diffusion model to learn the neighborhood distribution of given query node $v$. Specifically, we calculate the predicted noise at arbitrary time step $t$ conditioned on query node in line 12. We compute the diffusion loss and update diffusion parameters in lines 14--15. In lines 18--25, we sample a set multi-level negative nodes for query node by diffusion model. 
In lines 26-27, we compute the main link prediction loss and update the parameters of GNN encoder.

We analyze the complexity of \model\ for one node. Taking GCN as base encoder, the neighborhood aggregation for one node in the $l$-th layer has complexity $O(d_ld_{l-1}\bar{d})$, where $d_l$ is the dimension of the $l$-th layer and $\bar{d}$ is the average node degree. The computation for diffusion model includes time embeddings module and noise estimation module. The time embeddings module employs a MLP of $L_1$ layers, where a $l$-layer has complexity $O(d_ld_{l-1})$, with $d_l$ is the layer dimension. The noise estimation module employs a neural network of $L_2$ FiLM layers, where $l$-th layer has complexity of $O(2d_ld_{l-1})$ with $d_l$ is the layer dimension. Thus, the diffusion training takes $N$ iterations has complexity $O(NL_1L_2d_ld_{l-1})$. After that, the sampling runs the reverse process of $T$ steps (lines 20--23) has complexity $O(2Td_ld_{l-1})$. Compared to the base GCN, the overhead of diffusion part has complexity $O\big((2T+NL_1L_2)d_ld_{l-1}\big)$.

\section{Experiments}\label{sec.expt}

In this section, we conduct extensive experiments to evaluate the effectiveness of \model\ \footnote{Code \& data at \url{https://github.com/ntkien1904/DMNS.git}} on several benchmark datasets, and analyze several key aspects of the model. 

\subsection{Experimental Setup}

\begin{table}[t]
\centering 
\caption{Summary of datasets.\label{datasets}}
\begin{tabular}{@{}c|cccc@{}}
\toprule
Datasets & Nodes & Edges & Features & Property \\ 
\midrule
Cora & 2708 & 5429 & 1433 & homophilous \\ 
Citeseer & 3327 & 4732 & 3703 & homophilous\\ 
Coauthor-CS & 18333 & 163788 & 6805 & homophilous \\
Actor & 7600 & 30019 & 932 & heterophilous\\
\bottomrule
\end{tabular}
\end{table}

\begin{table*}[t]
    \centering
    \small
    \caption{Evaluation of link prediction against baselines using GCN as the base encoder.}
    \label{link_pred}
    \begin{tabular}{@{}lcccccccc@{}}
    \toprule
  \multirow{2}*{Methods} & \multicolumn{2}{c}{Cora} & \multicolumn{2}{c}{Citeseer} & \multicolumn{2}{c}{Coauthor-CS} & \multicolumn{2}{c}{Actor} \\
 & MAP & NDCG  & MAP & NDCG & MAP & NDCG & MAP & NDCG  \\
     \midrule

    GCN              & .742 ± .003          & .805 ± .003          & .735 ± .011          & .799 ± .008          & .823 ± .004          & .867 ± .003   & .521 ± .004          & .634 ± .003       \\
    GVAE       & \underline{.783} ± .003          & \underline{.835} ± .002          & .743 ± .004          & .805 ± .003          & .843 ± .011          & .882 ± .008   & \underline{.587} ± .004          & \underline{.684} ± .003       \\
   
    \midrule

    PNS        & .730 ± .008          & .795 ± .006          & .748 ± .006          & .809 ± .005          & .817 ± .004          & .863 ± .003  & .517 ± .006          & .631 ± .006         \\
    DNS        & .735 ± .007          & .799 ± .005          & \underline{.777} ± .005          & \underline{.831} ± .004          & .845 ± .003          & .883 ± .002          & .558 ± .006          & .663 ± .005\\
    MCNS       & .756 ± .004          & .815 ± .003          & .750 ± .006          & .810 ± .004          & .824 ± .004          & .868 ± .004 &  .555 ± .005          & .659 ± .004         \\

    \midrule
    GraphGAN         & .739 ± .003          & .802 ± .002          & .740 ± .011          & .803 ± .008          & .818 ± .007          & .863 ± .005   & .534 ± .007          & .644 ± .005        \\
    ARVGA       & .732 ± .011          & .797 ± .009          & .689 ± .005          & .763 ± .004          & .811 ± .003          & .858 ± .002    & .526 ± .012          & .638 ± .009      \\
    KBGAN            & .615 ± .004                 & .705 ± .003                 & .568 ± .006                 & .668 ± .005                 & \underline{.852} ± .002               & \underline{.888} ± .002     & .472 ± .003                 & .596 ± .002              \\

    \midrule
    SEAL       & .751 ± .007          & .812 ± .005         & .718 ± .002          & .784 ± .002          & .850 ± .001 & .886 ± .001 & .536 ± .001          & .641 ± .001 \\
    ScaLed      & .676 ± .004          & .752 ± .003          & .630 ± .004                        & .712 ± .003                       & .828 ± .001          & .869 ± .001       & .459 ± .001          & .558 ± .001   \\
    
    \midrule
    \model & \textbf{.793} ± .003 & \textbf{.844} ± .002 & \textbf{.790} ± .004 & \textbf{.841} ± .003 & \textbf{.871} ± .002          & \textbf{.903} ± .001      & \textbf{.600} ± .002          & \textbf{.696} ± .002    \\
 
    \bottomrule
    \multicolumn{9}{@{}l}{$^*$Best is \textbf{bolded} and runner-up \underline{underlined}.}\\
    \end{tabular}
\end{table*}

\begin{table*}[t]
    \centering
     \small
    \caption{Evaluation of link prediction on \model\ with various base encoders.}
    \label{link_pred_basegnn}
    \begin{tabular}{@{}lcccccccc@{}}
    \toprule
  \multirow{2}*{Methods} & \multicolumn{2}{c}{Cora} & \multicolumn{2}{c}{Citeseer} & \multicolumn{2}{c}{Coauthor-CS } & \multicolumn{2}{c}{Actor }\\
 & MAP & NDCG  & MAP & NDCG & MAP & NDCG  & MAP & NDCG \\
   
    \midrule
    GAT       & .766 ± .006          & .824 ± .004          & .767 ± .007          & .763 ± .062         & .833 ± .003                       &  .874 ± .002    & .479 ± .004                       &  .603 ± .003                  \\
    \model-GAT  & \textbf{.813} ± .004 & \textbf{.859} ± .003 & \textbf{.788} ± .007 & \textbf{.840} ± .006 &  \textbf{.851} ± .002 & \textbf{.889} ± .002     & \textbf{.573} ± .007 & \textbf{.675} ± .005                                        \\
    SAGE      & .598 ± .014          & .668 ± .013          & .622 ± .012          & .713 ± .009         & .768 ± .005          & .826 ± .004   & .486 ± .004          & .604 ± .003       \\
    \model-SAGE & \textbf{.700} ± .007 & \textbf{.773} ± .005 & \textbf{.669} ± .013 & \textbf{.749} ± .010 & \textbf{.843} ± .004 & \textbf{.883} ± .003 & \textbf{.582} ± .017 & \textbf{.682} ± .013\\
    
    \bottomrule
    \end{tabular}
\end{table*}


\paragraph{Datasets} We employ four public graph datasets, summarized in Table~\ref{datasets}. Cora and Citeseer \cite{planetoid} are two citation networks, where each node is a document and the edges represent citation links. Coauthor-CS \cite{shchur2018pitfalls} is a co-authorship network, where each node is an author and an edge exists if they co-authored a paper. Actor \cite{pei2020geom} is an actor co-occurrence network, where each node denotes an actor and each edge connects two actors both occurring on the same Wikipedia page.

\paragraph{Baselines} We employ a comprehensive suite of baselines for the link prediction task. (1) \emph{Classical GNNs}: GCN \cite{gcn} and VGAE \cite{vgae}, which are classical GNNs models for link prediction. (2) \emph{Heuristic negative sampling}: PNS \cite{word2vec}, DNS \cite{dns} and MCNS \cite{mcns}, which employ various heuristics to 
retrieve hard negative examples. (3) \emph{Generative adversarial methods}: GraphGAN \cite{graphgan}, ARVGA \cite{arga} and KBGAN \cite{kbgan}, which leverages GANs to learn the negative distribution and select hard examples. (4) \emph{Subgraph-based GNNs}: SEAL \cite{seal} and ScaLed \cite{scaled}, which utilizes local subgraphs surrounding the candidate nodes.
See Appendix~\ref{app:baselines} for a more detailed description of the baselines.

\paragraph{Task setup and evaluation} On each graph, we randomly split its links for training, validation and testing following the proportions 90\%:5\%:5\%. Note that the graphs used in training are reconstructed from only the training links. We adopt a ranking-based link prediction during testing. Given a query node $v$, we sample a positive candidate $u$ such that $(v,u)$ is a link in the test set, and further sample 9 nodes that are not linked to $v$ as negative candidates. For evaluation, we rank the 10 candidate nodes based on their dot product with the query node $v$. 
Based on the ranked list, we report two ranking-based metrics, namely, NDCG and MAP \cite{liu2021tail}, averaged over five runs.

\paragraph{Parameters and settings} 
For our model \model, by default, we employ GCN \cite{gcn} as the base encoder. To further evaluate the effectiveness of \model\ on different encoders, we also conduct experiments using GAT \cite{gat} and GraphSAGE (SAGE) \cite{sage}. For GCN, we employ two layers with dimension 32. For the diffusion model, we use two FiLM layers with output dimension 32, set the total time step as $T=50$, and assign the variances $\beta$ as constants increasing linearly from $10^{-4}$ to 0.02. We set the weights of negative examples to $\{1, 0.9, 0.8, 0.7\}$ in Eq.~\eqref{eq:15}, corresponding to time steps 
$\{\frac{T}{10}, \frac{T}{8}, \frac{T}{4}, \frac{T}{2}\}$.
For all baselines, we adopt the same GNN architecture and settings as in \model\ for a fair comparison. They also employ a log-sigmoid objective \cite{sage} consistent with our link prediction loss.
For those baselines which do not propose a negative sampling method, we perform the uniform negative sampling if required for training. Additional settings of our method and the baselines can be found in Appendix~\ref{app:settings}.


\subsection{Evaluation on Link Prediction}
\label{sub_link}

We evaluate the performance of \model~on link prediction against various baselines and with different base encoders.

\paragraph{Comparison with baselines.} We report the results of \model\ and the baselines in Table~\ref{link_pred}. Overall, \model\ significantly outperforms competing baselines across all datasets and metrics. The results indicate that our strategy of multi-level negative sampling is effective. We make three further observations.
First, heuristic negative sampling methods generally improve the performance upon the base encoder with uniform sampling (\ie, GCN), showing the utility of harder examples. Among them, DNS and MCNS are more robust as they leverage more sophisticated heuristics to select better negative examples. Second, GAN-based methods are often worse than classical GNNs, which is potentially due to the difficulty in training GANs. For instance, GVAE achieves a relatively competitive performance, while ARVGA---its variant with an adversarial regularizer---suffers from a considerable drop. Third, subgraph-based GNNs achieve mixed results, suggesting that the local structures can be effective to some extent yet high-quality negative examples are still needed for training. 

\paragraph{Evaluation with other encoders} We further utilize GAT and SAGE as the base encoders, and the corresponding model  \model-GAT and \model-SAGE, respectively. From the results reported in Table~\ref{link_pred_basegnn}, we observe that \model\ achieves significant improvements compared to its corresponding base encoders in all cases, which demonstrates the flexibility of \model~to effectively work with various encoders.

\subsection{Model Analyses}
\label{sect.analyses}

Finally, we investigate various aspects of \model\ through ablation studies, parameter sensitivity analysis and visualization on the four datasets.

\paragraph{Ablation studies.}
The first study investigates the ablation on model design, to demonstrate the effectiveness of each module in \model. We compare with two variants: (1) \emph{unconditional} diffusion, 
by removing the condition on the query node (and its associated neighboring nodes) such that
the diffusion model now generates arbitrary node embeddings from noises;
(2) \emph{unweighted} negative examples, by setting all $w_i=1$ in Eq.~\eqref{eq:15}. 
We report the results in Fig.~\ref{fig.ablation}a, and make the following observations. First, unconditional diffusion performs significantly worse than its conditional counterpart, since the generated negative examples are not optimized for a specific query node. Second, the unweighted variant also exhibits a drop in performance, implying that negative examples from different levels of hardness have different importance.

The second study involves the ablation on the sampling choice. Instead of sampling a mixture of examples from well-spaced time steps, we now only use output from a single time step for each query node. We vary the time step between $\frac{T}{10}$ and $\frac{T}{2}$, as shown in Fig.~\ref{fig.ablation}b. The performance of each single time step varies but all are worse than combining them together (\ie, \model), demonstrating the effectiveness of multi-level sampling. It is also observed that
smaller time steps ($\frac{T}{10},\frac{T}{8}$) often outperform larger time steps ($\frac{T}{4},\frac{T}{2}$), indicating that harder examples could be more useful.





\begin{figure}[t]
\centering
\includegraphics[width=1\linewidth]{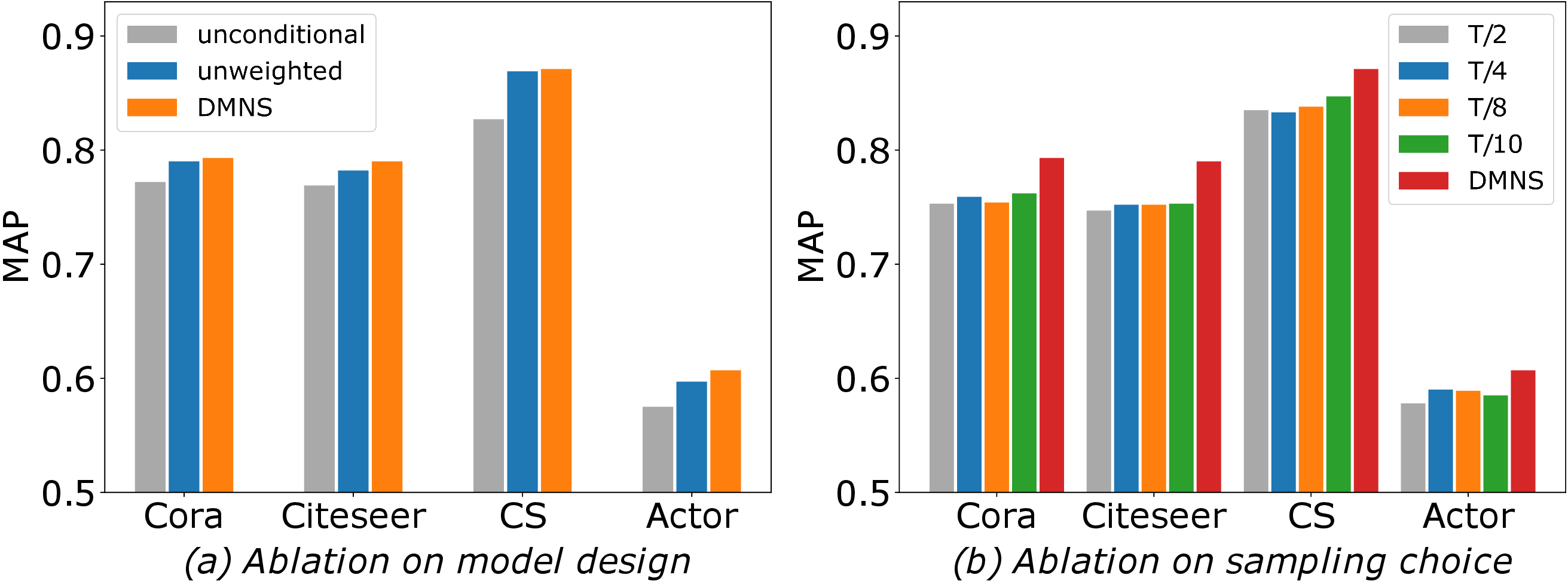}
\vspace{-3mm}
\caption{Ablation studies.}
\label{fig.ablation}
\end{figure}

\begin{figure}[t]
\centering
\includegraphics[width=1\linewidth]{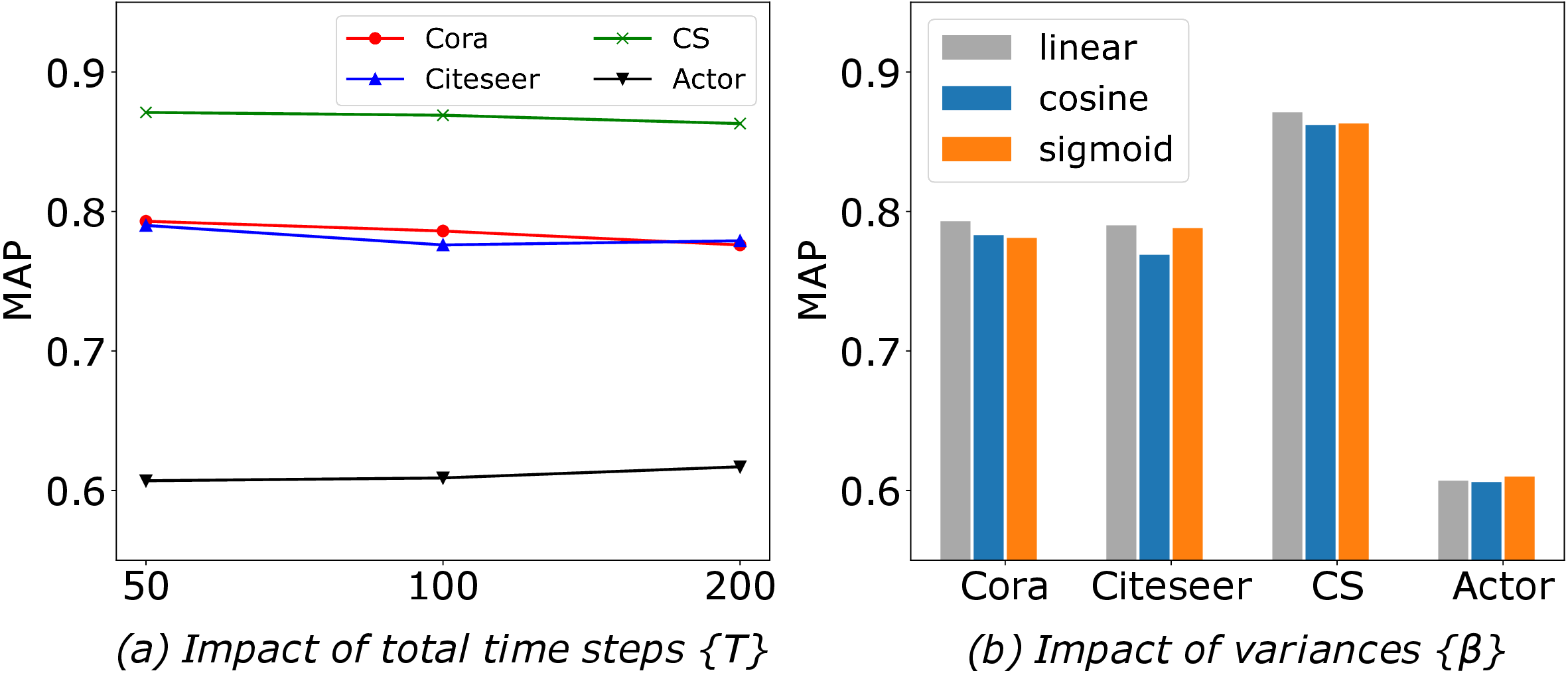}
\vspace{-3mm}
\caption{Parameter sensitivity.}
\label{fig.params}
\end{figure}


\begin{figure}[h]
\centering
\includegraphics[width=0.8\linewidth]{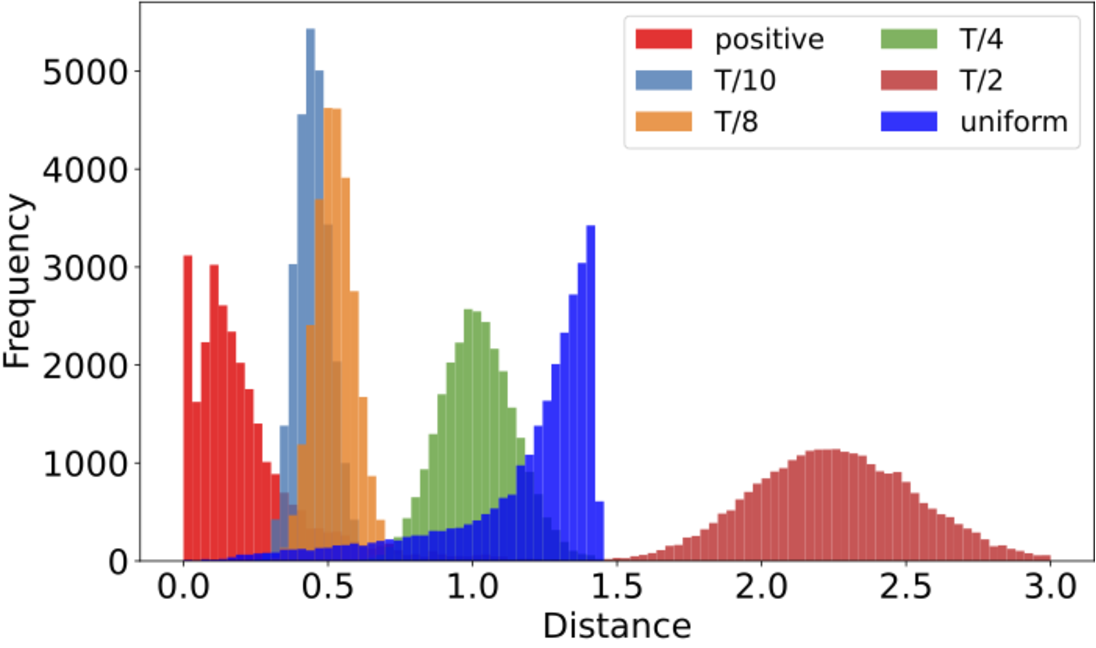} 
\vspace{-2mm}
\caption{Histogram of embedding distances from query.}
\label{fig.distance}
\end{figure}

\paragraph{Parameter studies} We showcase the impact of two important hyperparameters, including the total time step $T$ for the diffusion process and the scheduling for diffusion variances $\beta$. In Fig.~\ref{fig.params}a, we observe that increasing total time steps may not boost the performance while requiring more computation. In Fig.~\ref{fig.params}b, results indicate that linear scheduling achieves the best performance among several standard scheduling policies, including cosine and sigmoid scheduling \cite{nichol2021improved}.

\paragraph{Visualization} We further investigate the quality of the negative examples generated by \model. For each query node, we construct a positive set consisting of its neighbors, a negative set consisting of negative examples from \model\ from time $\frac{T}{10}$ to $\frac{T}{2}$, and a second negative set consisting of uniformly sampled nodes from the graph. We calculate the Euclidean distance between the query node embedding and the embeddings in each set, and plot the empirical distribution (histogram) of the distances in Fig.~\ref{fig.distance}. For the positive set, we naturally expect smaller distances to the query node. For the negative sets, the distance can be regarded as a proxy to hardness: Smaller distances from the query node imply harder examples. As we can see, examples of \model\ from $\frac{T}{10}$ to $\frac{T}{4}$ are harder than uniform sampling, but are not too hard (\ie, not closer to the query node than the positives) to impair the performance \cite{faghri2017vse++,xuan2020hard}, while using larger steps from $\frac{T}{2}$ may not bring more performance gain than uniform sampling. Overall, utilizing multi-level samples allows to capture a wide range of hardness levels for negative sampling.




\section{Conclusion}

In this paper, we investigated a novel strategy of   multi-level negative sampling for graph link prediction. Existing methods aim to retrieve hard examples heuristically or adversarially, but they are often inflexible or difficult to control the ``hardness''. In response, we proposed a novel sampling method named \model\ based on a conditional diffusion model, which empowers the sampling of negative examples at different levels of hardness. In particular, the hardness can be easily controlled by sampling from different time steps of the denoise process within the diffusion model. Moreover, we showed that \model\ largely obeys the sub-linear positivity principle for robust negative sampling. Finally, we conducted extensive experiments to demonstrate the effectiveness of DMNS. 
A limitation of our work is the focus on the effectiveness and robustness of negative sampling for link prediction. Hence, future direction can aim to optimize the sampling process for efficiency on large-scale graphs. Alternatively, we can investigate the potential of diffusion model in other graph learning tasks such as node classification.

\begin{acks}
This research / project is supported by the Ministry of Education, Singapore, under its Academic Research Fund Tier 2 (Proposal ID: T2EP20122-0041). Any opinions, findings and conclusions or recommendations expressed in this material are those of the author(s) and do not reflect the views of the Ministry of Education, Singapore.
\end{acks}

\balance
\newpage
\bibliographystyle{ACM-Reference-Format}
\bibliography{references}

\clearpage
\appendix
\section*{Appendices}
\renewcommand\thesubsection{\Alph{subsection}}
\renewcommand\thesubsubsection{\thesubsection.\arabic{subsection}}

\setcounter{secnumdepth}{2}

\appendix

\subsection{Details of Baselines}
\label{app:baselines}
In this section, we describe each baseline in more details.

\begin{itemize}[leftmargin=*]
    \item \emph{Classical GNNs}: GCN \cite{gcn} applies mean-pooling aggregates information for the target node by  over its neighbors. GVAE \cite{vgae} proposes an unsupervised training for graph representation learning by reconstructing graph structure.
    
    \item \emph{Heuristic negative sampling}: PNS is adopted from word2vec \cite{word2vec}, where the distribution for negative samples is calculated by the normalized node degree. DNS \cite{dns} dynamically select hard negative examples from candidates ranked by the current prediction scores. MCNS \cite{mcns} proposes a Metropolis-Hasting method for negative sampling based on the sub-linear positivity principle. 

    \item \emph{Generative adversarial methods}: GraphGAN \cite{graphgan} samples hard negative examples from the learned connectivity distribution through adversarial training. KBGAN \cite{kbgan} samples high quality negative examples for knowledge graph embeddings by the generator that learns to produce distribution for negative candidates conditioned on the positives. ARVGA \cite{arga} integrates an adversarial regularizer to GVAE to enforce the model on generating realistic samples. 
    
    \item \emph{Subgraph-based GNNs}: SEAL \cite{seal} proposes to sample k-hops local subgraphs surrounding two candidates for link prediction. ScaLED \cite{scaled} improves SEAL by utilizing random walks for efficient subgraphs sampling.
\end{itemize}

For other base GNN encoders, GAT \cite{gat} employs self-attention mechanism to produce learnable weights to each neighbor of target node during aggregation. SAGE \cite{sage} first aggregates information from target node neighbors, then concatenates with the node itself to obtain the node embedding.

\subsection{Model Settings}
\label{app:settings}

For all the approaches, we use a two layers GCN as encoder with output dimension as 32 for fair comparison to conduct link prediction. For our model \model, we set learning rate as 0.01, dropout ratio as 0.1. For each query nodes $v$, we sample a neighbor set $N_v = 20$ for diffusion training. 

For GCN, we set the hidden dimension as 32. For GAT, we use three attention heads with hidden dimension for each head as 32. For SAGE, we use mean aggregator for concatenation and set its hidden dimension as 32. In PNS, we utilize node degree to calculate the popularity distribution and normalize it to 3/4$^\text{th}$ power. For DNS, we set the number of negative candidates as 10 and select 1 negative sample as the highest ranked node. For MCNS, we set the DFS length as 5 and the proposal distribution $q(y|x)$ is mixed betwwen uniform and k-nearest nodes sampling wih $p=0.5$. In these methods we set learning rate as 0.01.

For GraphGAN, we use pretrained GCN to obtain initialized node embeddings and set the learning rates for discriminator $D$ and generator $G$ as 0.0001. In each iteration, we set the number of gradient updates for both $D$ and $G$ as 1 for best results. For KBGAN, we adapt it to the homogeneous graph setting by neglecting all relation types and use a single embedding to represent a universal relation. We further set the number of negative examples as 20 and number of gradient updates for both $D$ and $G$ as 1.
For ARVGA, learning rates for encoder $E$ and discriminator $D$ are 0.005 and 0.001 respectively, hidden dimension for $D$ is 64, 
the number of gradient updates $D$ for each iteration is 5.    

For Subgraph-based GNNs approaches, we replace their original objective Binary Cross Entropy with our Log Sigmoid and do not utilize additional node structural embeddings for fair evaluation. For SEAL, we set learning rate as 0.0001, the number of hop $k$ equals 1. On ScaLed, we set random walk length as 3 and number of walks as 20.

\subsection{Environment}
All experiments are conducted on a workstation with a 12-core CPU, 128GB RAM, and 2 RTX-A5000 GPUs. We implemented \model\ using Python 3.8 and Pytorch 1.13 on Ubuntu-20.04.


\subsection{Impact of Generated Samples}
To evaluate effectiveness of generated samples, we conduct additional ablation studies on the two separate sets with heuristic and generated ones. Results from Fig. \ref{fig.generated} show that generated samples can achieve better results than heuristic ones, and combining both sets yields the best performance.

\begin{figure}[h]
\centering
\includegraphics[width=0.6\linewidth]{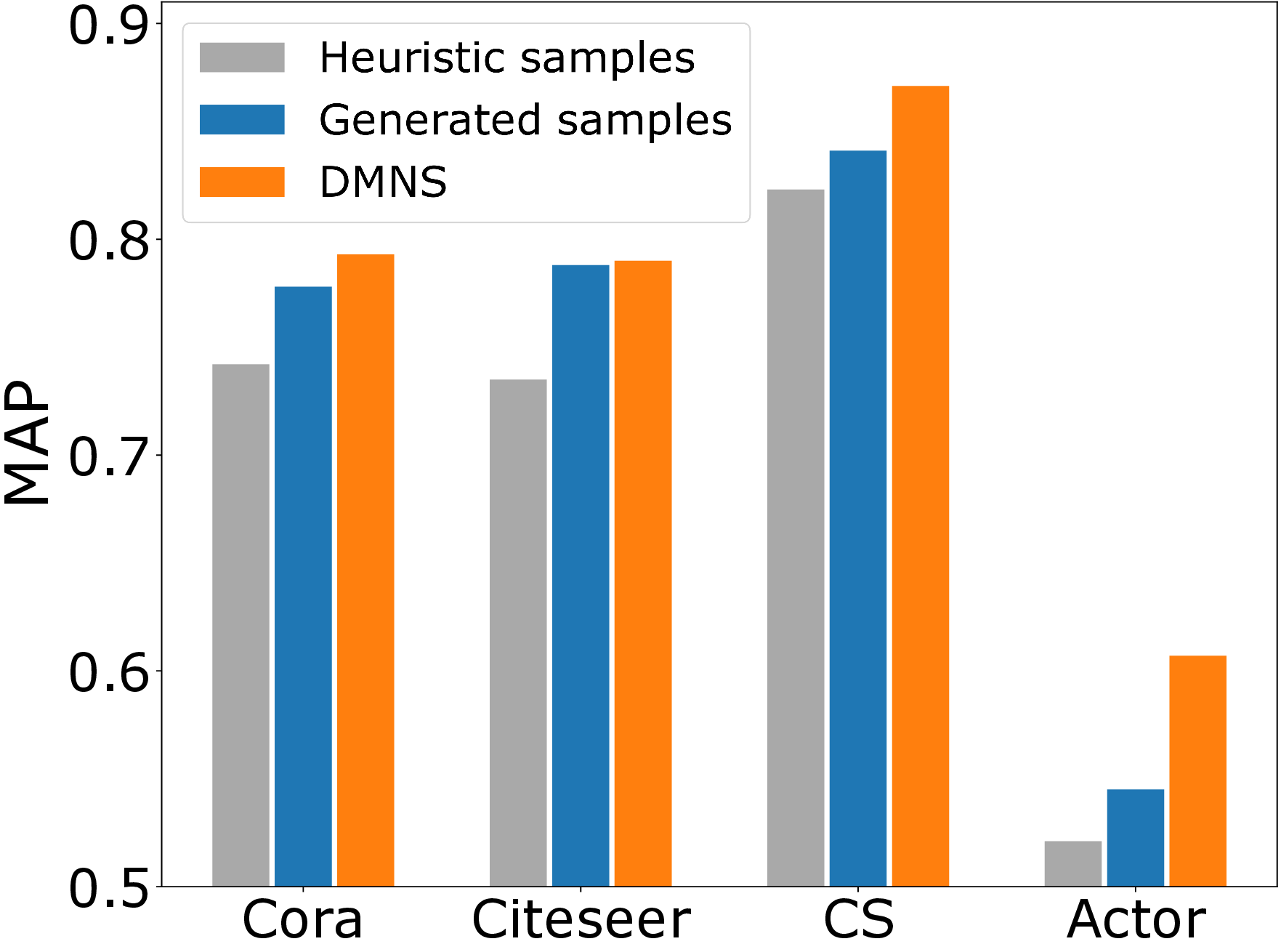} 
\vspace{-2mm}
\caption{Ablation on generated samples.}
\label{fig.generated}
\end{figure}


\subsection{Scalability} 
To analyze time complexity, we sample five subgraphs from the large-scale OGB graph \cite{nguyen2023link} with sizes ranging from 20k to 100k nodes. We report the training time (in second) of one epoch of DMNS in the Table \ref{time}. We observe that the training time of DMNS increases linearly with the graph size in terms of the number of nodes, demonstrating its scalablity to larger graphs.

\begin{table}[h]
  \begin{minipage}{0.4\linewidth}
    \centering
    \small
    \begin{tabular}{ccc}
      \hline
      \textbf{Nodes} & \textbf{Edges} & \textbf{Time per epoch}\\
      \hline
     20k & 370k & 1.61s \\
    40k & 810k & 3.18s \\
    60k & 1.2M & 4.82s \\
    80k & 1.6M & 6.27s \\
    100k & 1.8M & 7.51s \\
      \hline
    \end{tabular}
    \caption{Train time}
    \label{time}
  \end{minipage}%
  \begin{minipage}{0.4\textwidth}
    \centering
    \small
    \begin{tabular}{cc}
      \hline
      \textbf{Methods} & \textbf{\#Parameters}\\
      \hline
    MCNS & 5k \\
    GVAE & 48k \\
    SEAL & 78k \\
    ARVGA & 56k \\
    DMNS & 13k \\
      \hline
    \end{tabular}
    \caption{Model Size}
    \label{size}
  \end{minipage}
\end{table}

Regarding space complexity, we provide the model size of DMNS (i.e., number of parameters) compared to several representative baselines, all using a base encoder of 2-layer GCN with hidden dimension as 32. Table \ref{size} shows that our model is comparatively efficient in model size compared to other advanced baselines like GVAE, SEAL or ARVGA.


\end{document}